\newcolumntype{L}[1]{>{\raggedright\let\newline\\\arraybackslash\hspace{0pt}}m{#1}}
\newcolumntype{C}[1]{>{\centering\let\newline\\\arraybackslash\hspace{0pt}}m{#1}}
\newcolumntype{R}[1]{>{\raggedleft\let\newline\\\arraybackslash\hspace{0pt}}m{#1}}
\pgfplotsset{compat=1.14}
\newtheorem{theorem}{Theorem}
\title{$k$-RNN: Extending NN-heuristics for the TSP}
\author{Nikolas Klug, Alok Chauhan, Ramesh Ragala, V Vijayakumar}
\begin{document}
	\maketitle
	\begin{abstract}
		In this paper we present an extension of existing Nearest-Neighbor heuristics to an algorithm called $k$-Repetitive-Nearest-Neighbor. 
		The idea is to start with a tour of $k$ nodes and then perform a Nearest-Neighbor search from there on. 
		After doing this for all permutations of $k$ nodes the result gets selected as the shortest tour found. 
		Experimental results show that for $2$-RNN the solutions quality remains relatively stable between about $10\%$ to $40\%$ above the optimum.
	\end{abstract}

	\section{Introduction}
	\label{sec:introduction}
	The traveling salesman problem (TSP) is one of the most studied problems in theoretical computer science. 
	Despite its simplicity, no efficient algorithms exist. 
	The problem is, in fact, NP-hard and asks to find a shortest Hamiltonian Cycle in a given graph -- a permutation of vertices for which the sum of the connecting edges is the least.
	We refer to the TSP in undirected complete graphs as Symmetric TSP (STSP) and in directed complete graphs as Asymmetric TSP (ATSP).
	Following a popular convention for TSP-related algorithms, we will also refer to a Hamiltonian cycle as a tour. 	

	Due to the problem's popularity and its broad area of application, many heuristics and approximation algorithms have emerged. One of the popular ones are the $2$-Opt heuristic $[$2$]$, the $3$-Opt algorithm $[$4$]$ and the $k$-opt extension by Lin and Kernigham $[$5$]$ which uses a dynamic $k$ value.
	
	The idea behind the algorithm which is presented here is the "Nearest-Neighbor" heuristic (NN).
	It has already been mentioned in the 1960s by Bellmore and Nemhauser $[$1$]$.
	The basic idea of this algorithm is to pick one starting node randomly and repeatedly extend the sub-tour by its current nearest neighbor until a full tour is formed.
	
	$k$-Repetitive-Nearest-Neighbor ($k$-RNN), the algorithm presented here, provides some abstraction and generalization to this Nearest-Neighbor heuristic. 
	It is applicable to both STSP and ATSP and the experimental results in Section \ref{sec:experimental} do not show a big difference in quality of solution between those two.
	
	The original idea for this extension of Nearest-Neighbor heuristics was what now has become $2$-RNN. It was inspired by an Indian philosophy called Madhyasth Darshan $[$6,7$]$. 
	
	In Section \ref{sec:algorithm} we present the algorithm and show some of its properties, then we show some related work. 
	In Section \ref{sec:experimental} we show experimental results for $1$- and $2$-RNN obtained from running some samples form TSPLIB $[$8$]$.
	 
	\section{Algorithm}
	\label{sec:algorithm}
	In this section we will present the family of algorithms we call $k$-Repetitive-Nearest-Neighbor ($k$-RNN) algorithms. 
This abstracts the Nearest-Neighbor (NN) and Repetitive-Nearest-Neighbor (RNN) heuristics and extend them to a more general basis.

Let $G = (V, E)$ be a complete graph and $k \in \mathbb{N}$. Let $v_1, v_2, \mathellipsis, v_k$ be distinct vertices of $G$. 
Let $c: V \times V \rightarrow \mathbb{R}$ be a cost function describing the cost of the edge between two vertices. 
The algorithm consists of the following steps:

\begin{description}
	\item[Step 1:] For every combination of the $k$ vertices $v_1, v_2, \mathellipsis, v_k$ create the partial tour $T = (v_1, v_2, \mathellipsis, v_k)$ and mark the vertices $v_1, v_2, \mathellipsis, v_k$ as visited.
	
	\item[Step 2:] Set $i = k$. While there are unvisited vertices left: 
	Select $v_{i+1}$ as the nearest unvisited neighbor of $v_i$ and append $v_{i+1}$ to $T$. 
	If there are multiple nearest neighbors, select any.
	Mark $v_{i+1}$ as visited and increment $i$ by $1$.
	
	\item[Step 3:] Among all $\frac{n!}{(n-k)!}$ tours found select the shortest as the result
\end{description}	

Note that for $k = 1$ the algorithm equals the well-known Repetitive-Nearest-Neighbor approach, and one might define the case $k = 0$ as the popular Nearest-Neighbor algorithm, where one starting node gets selected randomly.

The running time of the algorithm consists of two major parts. 
The first is the outer for-loop in Step 1. 
Since for every $k \in \mathbb{N}$ there exist $\frac{n!}{(n-k)!}$ (ordered) combinations of nodes and since $\frac{n!}{(n-k)!} \in O(n^k)$, the for-loop in Step 1 does a total of $O(n^k)$ iterations.
The second important part for the running time is Step 2. In order to find the nearest unvisited neighbor, the algorithm considers a maximum of $n$ edges emerging from the current node. 
Since this is done for $n - k = O(n)$ nodes, the total running time of Step 2 is $O(n^2)$.
Therefore a $k$-RNN run takes time $O(n^2 \cdot n^k) = O(n^{k+2})$.

Similar to the $k$-Opt algorithms $[$2,5$]$, for $k = n$, $k$-RNN also gives the exact solution. In this case, the algorithm degrades to a plain brute-force search.
Due to the running time of $O(n!)$ this is not the preferable way to obtain an exact solution.

In the following we refer to an execution of $k$-RNN for a specific instance and fixed $k$ as a "run" of the algorithm.
We now present a property of the algorithm about the quality of the solution found by $k$-RNN.
\begin{theorem}
	\label{theo:quality}
	For $k < l$, if there exist no vertices $a, b, c$ with $c(a, b) = c(a, c)$, the result of an $l$-RNN run is always better or equal to the result of a $k$-RNN run.
\end{theorem}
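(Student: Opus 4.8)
The plan is to establish the stronger, purely structural fact that every tour constructed during a $k$-RNN run is also among the tours constructed during the $l$-RNN run. Since Step 3 of each algorithm outputs the shortest of all tours it built, this inclusion immediately forces the $l$-RNN result to be no longer than the $k$-RNN result. The hypothesis that no vertex $a$ has two distinct equidistant neighbours $b,c$ enters here precisely to make Step 2 deterministic: the ``nearest unvisited neighbour'' of a vertex is then always unique, so the nearest-neighbour completion of a given ordered partial tour is a well-defined function of that partial tour. I would remark at the start that this is also what makes ``the result of an $l$-RNN run'' an unambiguous object.

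Concretely, I would fix an arbitrary tour $T=(v_1,\dots,v_n)$ produced by the $k$-RNN run; by Steps 1--2 it arises from the starting tuple $(v_1,\dots,v_k)$, and for every index $i\ge k$ the vertex $v_{i+1}$ is the unique nearest neighbour of $v_i$ in $V\setminus\{v_1,\dots,v_i\}$. Assume $k<l\le n$ (the case $l=n$ is trivial, since then $l$-RNN is the exhaustive search mentioned above and returns an optimal tour). The prefix $(v_1,\dots,v_l)$ of $T$ consists of $l$ distinct vertices, hence is one of the starting tuples enumerated by the $l$-RNN run in Step 1. Running Step 2 from this tuple, I would show by induction on $i=l,l+1,\dots,n-1$ that it appends exactly $v_{i+1}$: the invariant is that at the start of stage $i$ the set of visited vertices equals $\{v_1,\dots,v_i\}$ in both executions, so by uniqueness the nearest unvisited neighbour of $v_i$ is the same vertex $v_{i+1}$ in both. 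Thus the $l$-RNN run reconstructs $T$, completing the inclusion.

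The induction is routine; the content is entirely in the no-ties hypothesis, and the only point that needs care is stating the induction invariant cleanly -- that the two executions keep identical sets of visited vertices -- so that uniqueness of the nearest neighbour can actually be invoked, rather than merely the existence of a matching tie-break. Once the inclusion of the two tour-collections is in hand, comparing their minima finishes the argument.
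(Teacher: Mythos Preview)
Your proposal is correct and follows essentially the same approach as the paper: take the tour $T$ output by the $k$-RNN run, observe that its length-$l$ prefix is one of the starting tuples for the $l$-RNN run, and use the no-ties hypothesis to conclude that the nearest-neighbour completion from that prefix reproduces $T$. Your write-up is more explicit about the induction invariant and about passing from the inclusion of tour collections to the inequality of minima, but the underlying argument is the same.
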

\begin{proof}
	Let $T$ be a tour found by a $k$-RNN run and $(v_1, v_2, \mathellipsis, v_l)$ the first $l$ nodes of $T$. 
	A $l$-RNN run considers every permutation of $l$ nodes as a starting tour, so especially the permutation $(p_1, p_2, \mathellipsis, p_l)$ where $p_i = v_i$ for $1 \leq i \leq l$. 
	From there on, both runs construct the same tour.
\end{proof}

There are also some variations of the presented algorithm of which we want to mention two.

In case of multiple nearest neighbors in Step 2, one node gets chosen randomly. 
In order to avoid this non-deterministic choice, a variation of the algorithm constructs multiple paths from there on, one for each nearest neighbor. 
Although this eliminates the randomness, in some cases this will make the running time exponential.

Another approach is the following:
In Step 2 node $v_{i+1}$ gets selected as the nearest unvisited neighbor of the last node of the current partial tour. 
This can be extended in such a way that the new node can also be selected as the nearest unvisited neighbor of the first node of the partial tour. 
We refer to this approach as bi-directional $k$-RNN (Bi-$k$-RNN). Step 2 of the algorithm would then look the following:

\begin{description}
	\item[Step 2:] Set $v_e = v_k$ and $v_s = v_1$. 
	While there are unvisited vertices left: 
	Select $q$ as 
	\[
	\arg \min\{c(v_e, p),\ c(v_s, p) |\ p \in V \text{ and $p$ is unvisited}\}
	\] 
	and mark $q$ as visited. If $c(q, v_s) < c (v_e, q)$, insert $q$ at the start of $T$ and set $v_s = q$. Else append $q$ to the end of $T$ and set $v_e = q$.
\end{description}

We also present results of this variation in Section \ref{sec:experimental}.

	\section{Related Work}
	\label{sec:related}
	
	Similar to Gutin, Yeo and Zverovich in their domination analysis of greedy heuristics for the TSP $[$3$]$, for every $n \in \mathbb{N}$ we define the \textit{domination number} of an algorithm for the TSP as the maximum integer $d(n)$ for which the algorithm produces a tour that is better or equal than $d(n)$ other tours.

	In general the Nearest-Neighbor approach has been shown to be sub optimal $[$3$]$: 
	For each number of nodes there exist some instances for which the algorithm produces a very poor result.
	In fact, the domination number for $0$-RNN is 1, the worst possible domination number.
	$1$-RNN has a domination number of at most $n - 1$ and at least $n / 2$.
	As shown in theorem \ref{theo:quality}, every run of the algorithm for $k \geq 1$ considers all tours a $1$-RNN run for the same instance does.
	Therefore for $k \geq 1$, $k$-RNN also has a domination number of at least $n / 2$.
	
	Despite this all samples tested in our experiments in Section \ref{sec:experimental} produce reasonable results.
	\section{Experimental Results}
	\label{sec:experimental}
	
	In the following we are going to present some experimental results of the algorithm. The experiments where conducted for several instances of TSPLIB $[$8$]$, symmetric as well as asymmetric, comparing $1$- and $2$-RNN runs.
Because of the running time, larger $k$-values could not be tested adequately.

\begin{figure}[!htbp]
	\centering
	\begin{minipage}{\linewidth}
		\centering
		\scriptsize
		\begin{tabular}{*{3}{L{1.6cm}} >{\bfseries}L{1.6cm} L{1.6cm} >{\bfseries}L{1.6cm} L{1.6cm} >{\bfseries}L{1.6cm}}
			\toprule
			&& \multicolumn{2}{c}{1-RNN}& \multicolumn{2}{c}{2-RNN} & \multicolumn{2}{c}{Bi-2-RNN} \\
			\cmidrule(lr){3-4}
			\cmidrule(lr){5-6}
			\cmidrule(lr){7-8}
			Dataset & Optimum & Result & Excess & Result & Excess & Result & Excess \\
			\midrule
			a280      & 2579   & 2975   & 15.35  & 2953   & 14.50 & 2951   & 14.42  \\
			berlin52  & 7542   & 8181   & 8.47   & 7968   & 5.65  & 8380   & 11.11  \\
			bier127   & 118282 & 133953 & 13.25  & 128589 & 8.71  & 129133 & 9.17   \\
			brazil58  & 25395  & 27384  & 7.83   & 27213  & 7.16  & 27115  & 6.77   \\
			brg180    & 1950   & 8890   & 355.90 & 2020   & 3.59  & 8890   & 355.90 \\
			ch130     & 6110   & 7129   & 16.68  & 6903   & 12.98 & 6833   & 11.83  \\
			ch150     & 6528   & 7113   & 8.96   & 7113   & 8.96  & 7075   & 8.38   \\
			d1291     & 50801  & 58681  & 15.51  & 58681  & 15.51 & 58460  & 15.08  \\
			d1655     & 62128  & 73369  & 18.09  & 72554  & 16.78 & 71858  & 15.66  \\
			d198      & 15780  & 17620  & 11.66  & 17405  & 10.30 & 17753  & 12.50  \\
			d493      & 35002  & 40186  & 14.81  & 40186  & 14.81 & 39821  & 13.77  \\
			d657      & 48912  & 60174  & 23.03  & 59310  & 21.26 & 58874  & 20.37  \\
			dantzig42 & 699    & 864    & 23.61  & 826    & 18.17 & 848    & 21.32  \\
			eil101    & 629    & 746    & 18.60  & 743    & 18.12 & 738    & 17.33  \\
			eil51     & 426    & 482    & 13.15  & 472    & 10.80 & 483    & 13.38  \\
			eil76     & 538    & 608    & 13.01  & 598    & 11.15 & 576    & 7.06   \\
			fl1400    & 20127  & 25115  & 24.78  & 24719  & 22.82 & 24587  & 22.16  \\
			fl417     & 11861  & 13887  & 17.08  & 13866  & 16.90 & 13581  & 14.50  \\
			fri26     & 937    & 965    & 2.99   & 959    & 2.35  & 960    & 2.45   \\
			gil262    & 2378   & 2823   & 18.71  & 2767   & 16.36 & 2768   & 16.40  \\
			gr120     & 6942   & 8438   & 21.55  & 8335   & 20.07 & 8411   & 21.16  \\
			gr17      & 2085   & 2178   & 4.46   & 2178   & 4.46  & 2178   & 4.46   \\
			gr21      & 2707   & 3003   & 10.93  & 2958   & 9.27  & 2998   & 10.75  \\
			gr24      & 1272   & 1553   & 22.09  & 1400   & 10.06 & 1476   & 16.04  \\
			gr48      & 5046   & 5840   & 15.74  & 5561   & 10.21 & 5695   & 12.86  \\
			hk48      & 11461  & 12137  & 5.90   & 12031  & 4.97  & 11990  & 4.62   \\
			kroA100   & 21282  & 24698  & 16.05  & 24582  & 15.51 & 24548  & 15.35  \\
			kroA150   & 26524  & 31479  & 18.68  & 31320  & 18.08 & 31234  & 17.76  \\
			kroA200   & 29368  & 34543  & 17.62  & 34543  & 17.62 & 35329  & 20.30  \\
			kroB100   & 22141  & 25884  & 16.91  & 25255  & 14.06 & 25546  & 15.38  \\
			kroB150   & 26130  & 31611  & 20.98  & 31524  & 20.64 & 30043  & 14.98  \\
			kroB200   & 29437  & 35389  & 20.22  & 35283  & 19.86 & 35454  & 20.44  \\
			kroC100   & 20749  & 23660  & 14.03  & 23603  & 13.75 & 23970  & 15.52  \\
			kroD100   & 21294  & 24852  & 16.71  & 24603  & 15.54 & 23722  & 11.40  \\
			kroE100   & 22068  & 24782  & 12.30  & 24445  & 10.77 & 24185  & 9.59   \\
			lin105    & 14379  & 16935  & 17.78  & 16147  & 12.30 & 15878  & 10.42  \\
			lin318    & 42029  & 49201  & 17.06  & 49201  & 17.06 & 48996  & 16.58  \\
			linhp318  & 41345  & 49201  & 19.00  & 49201  & 19.00 & 48996  & 18.51  \\
			nrw1379   & 56638  & 68531  & 21.00  & 67873  & 19.84 & 67415  & 19.03  \\
			p654      & 34643  & 43027  & 24.20  & 42935  & 23.94 & 42493  & 22.66  \\
			pa561     & 2763   & 3279   & 18.68  & 3269   & 18.31 & 3284   & 18.86  \\
			pcb1173   & 56892  & 70115  & 23.24  & 69085  & 21.43 & 69325  & 21.85  \\
			pcb442    & 50778  & 58950  & 16.09  & 58682  & 15.57 & 58599  & 15.40  \\
			pr76      & 108159 & 130921 & 21.04  & 128749 & 19.04 & 129467 & 19.70  \\
			si1032    & 92650  & 94083  & 1.55   & 93981  & 1.44  & 93731  & 1.17   \\
			si175     & 21407  & 22000  & 2.77   & 21906  & 2.33  & 21927  & 2.43   \\
			si535     & 48450  & 50036  & 3.27   & 50032  & 3.27  & 49853  & 2.90   \\
			swiss42   & 1273   & 1437   & 12.88  & 1425   & 11.94 & 1350   & 6.05   \\
			\bottomrule
		\end{tabular}
	\end{minipage}
	\caption{Results for 48 instances of the Symmetric TSP taken from $[$8$]$. The optimum and the result are given in absolute values. The excess represents the percentage by which the result exceeds the optimum.}
	\label{fig:STSP}
\end{figure}

Figure \ref{fig:STSP} and Figure \ref{fig:ATSP} show the results of experiments conducted for 48 instances of the STSP and 18 instances of ATSP taken from TSPLIB $[$8$]$.
It can be observed that the results of $2$-RNN are only slightly better than those of $1$-RNN, sometimes they are even equal.
We would like to point out two specific instances, the first is \texttt{brg180}.
For this 180-city instance, standard 1-RNN returns the poor result of 8890 (355.9\% above the optimum) while 2-RNN is able to avoid this returning a tour with length 2020 (3.59\% above the optimum).
For one other instance (\texttt{br17}), $2$-RNN even produces the exact result.

In general all results produced by $2$-RNN are reasonable. 
On average, the tours produced are 10 \% to 40 \% longer than the optimum.
Using these observations, we expect the algorithm to perform similarly for other instances although there might be instances were the output is of unreasonable quality.

Considering that a $2$-RNN run for an instance with 1379 vertices (namely TSPLIB's \texttt{nrw1379}) takes about 60 minutes to finish while a $1$-RNN run for the same instance only takes about 3 seconds (tested on a standard laptop), the quality of the solution does not increase by a big enough amount to justify the running time.

The results of the bidirectional variant of 2-RNN are of different quality. For some instances they outperform $2$-RNN, for some others $2$-RNN performs better.
As the bidirectional version tries to extend the tour in both directions rather than only one, the running time is about twice as long as for the normal variant.

\begin{figure}[!htbp]
	\centering
	\begin{minipage}{\linewidth}
		\centering
		\scriptsize
		\begin{tabular}{*{3}{L{1.6cm}} >{\bfseries}L{1.6cm} L{1.6cm} >{\bfseries}L{1.6cm} L{1.6cm} >{\bfseries}L{1.6cm}}
			\toprule
			&& \multicolumn{2}{c}{1-RNN}& \multicolumn{2}{c}{2-RNN} & \multicolumn{2}{c}{Bi-2-RNN}\\
			\cmidrule(lr){3-4}
			\cmidrule(lr){5-6}
			\cmidrule(lr){7-8}
			Dataset & Optimum & Result & Excess & Result & Excess & Result & Excess \\
			\midrule
			br17    & 39    & 56    & 43.59 & 39    & 0.00  & 56    & 43.59 \\
			ft53    & 6905  & 8584  & 24.32 & 8323  & 20.54 & 8757  & 26.82 \\
			ft70    & 38673 & 41815 & 8.12  & 41633 & 7.65  & 41847 & 8.21  \\
			ftv33   & 1286  & 1590  & 23.64 & 1544  & 20.06 & 1396  & 8.55  \\
			ftv35   & 1473  & 1667  & 13.17 & 1606  & 9.03  & 1667  & 13.17 \\
			ftv38   & 1530  & 1759  & 14.97 & 1709  & 11.70 & 1792  & 17.12 \\
			ftv44   & 1613  & 1844  & 14.32 & 1829  & 13.39 & 1833  & 13.64 \\
			ftv47   & 1776  & 2173  & 22.35 & 2149  & 21.00 & 2115  & 19.09 \\
			ftv55   & 1608  & 1948  & 21.14 & 1854  & 15.30 & 1848  & 14.93 \\
			ftv64   & 1839  & 2202  & 19.74 & 2202  & 19.74 & 2176  & 18.33 \\
			ftv70   & 1950  & 2287  & 17.28 & 2218  & 13.74 & 2261  & 15.95 \\
			ftv170  & 2755  & 3582  & 30.02 & 3559  & 29.18 & 3334  & 21.02 \\
			kro124p & 36230 & 43316 & 19.56 & 43102 & 18.97 & 41562 & 14.72 \\
			p43     & 5620  & 5684  & 1.14  & 5653  & 0.59  & 5639  & 0.34  \\
			rbg323  & 1326  & 1702  & 28.36 & 1684  & 27.00 & 1743  & 31.45 \\
			rbg358  & 1163  & 1747  & 50.21 & 1719  & 47.81 & 1570  & 35.00 \\
			rbg403  & 2465  & 3497  & 41.87 & 3460  & 40.37 & 2928  & 18.78 \\
			ry48p   & 14422 & 15575 & 7.99  & 15575 & 7.99  & 15308 & 6.14  \\
			\bottomrule
		\end{tabular}
	\end{minipage}
	
	\caption{Results for 18 instances of the Asymmetric TSP taken from TSPLIB $[$8$]$. The optimum and the result are given in absolute values. The excess represents the percentage by which the result exceeds the optimum.}
	\label{fig:ATSP}
\end{figure}

Figure \ref{fig:ATSP} shows experimental results for 18 instances of the ATSP.
In comparison to the $2$-RNN results for the STSP, the $2$-RNN results here seem to be slightly worse:
Whereas there is no instance where the tour length exceeds the optimum by more than 25\% for STSP there are 4 instances for the ATSP where this is the case.

	\section{Conclusion}
	\label{sec:conclusion}
	
	We presented an extension of the already known Nearest-Neighbor heuristic to a family of algorithms we call $k$-Repetitive-Nearest-Neighbor ($k$-RNN). 
	This algorithm takes all permutations of $k$ vertices as a starting tour and performs a Nearest-Neighbor search from there on. 
	The result is the shortest of all tours found that way.
	
	We have proven that as $k$ increases, so does the quality of the tours found.
	Despite this, our experimental results only show a slight increase in the quality of solution as $k$ increases, meanwhile the running time increases by a factor of $n$.
	In one case a larger $k$ ($2$-RNN) was able to avoid an undesirable result from $1$-RNN, reducing the excess by over 350\%.

	A scope of future research could be a more thorough theoretical analysis of the algorithm, especially an extension of the domination analysis to the more general $k$-RNN will give more insight in its competitiveness among other algorithms. Also, experiments on more varied and larger instances are desirable.


\begin{thebibliography}{4}
	\bibitem{jour}Mandell Bellmore and George L. Nemhauser. The traveling salesman problem: A survey. Operations Research, 16:538-558, 1968.
	\bibitem{jour}G. A. Croes. A Method for Solving Traveling-Salesman Problems. Operations Research,6(6):791-812, 1958.
	\bibitem{jour}Gregory Gutin, Anders Yeo, and Alexey Zverovich. Traveling salesman should not be greedy:domination analysis of greedy-type heuristics for the TSP. Discrete Applied Mathematics,117(1):81-86, 2002.
	\bibitem{jour}S. Lin. Computer solutions of the traveling salesman problem. The Bell System Technical Journal, 44(10):2245-2269, Dec 1965.
	\bibitem{jour}S. Lin and B. W. Kernighan. An Effective Heuristic Algorithm for the Traveling-Salesman Problem. Operations Research, 21(2):498-516, 1973.
	\bibitem{jour}A. Nagraj. Madhyasth Darshan Sahastitvavaad par Aadharit Samvaad (Part 1). Jeevan Vidhya Prakashan, 2011.
	\bibitem{jour}A. Nagraj. Madhyasth Darshan Sahastitvavaad par Aadharit Samvaad (Part 2). Jeevan Vidhya Prakashan, 2013.
	\bibitem{jour}Gerhard Reinelt. TSPLIB - A Traveling Salesman Problem Library. ORSA Journal on Computing, 3(4):376-384, 1991.
	 \end{thebibliography}

\end{document}